\DeclareMathOperator*{\argmin}{arg\,min}
\DeclareMathOperator*{\argmax}{\arg\max}
\newcommand{\bx}{\boldsymbol{x}}
\newcommand{\bg}{\boldsymbol{g}}
\newcommand{\bE}{\mathbb{E}}
\newcommand{\cX}{\mathcal{X}}
\newcommand{\cY}{\mathcal{Y}}
\newcommand{\cD}{\mathcal{D}}
\newcommand{\cG}{\mathcal{G}}
\newcommand{\cH}{\mathcal{H}}
\newcommand{\bz}{\boldsymbol{z}}
\let\Ginclude@graphics\@org@Ginclude@graphics 
\title[Multi-class Classification from Multiple Unlabeled Datasets]{Multi-class Classification from Multiple Unlabeled Datasets\\ with Partial Risk Regularization}
 \author{\Name{Yuting Tang} \Email{tang@ms.k.u-tokyo.ac.jp}\\
 \Name{Nan Lu} \Email{lu@ms.k.u-tokyo.ac.jp}\\
 \Name{Tianyi Zhang} \Email{zhang@ms.k.u-tokyo.ac.jp}\\
 \addr The University of Tokyo/RIKEN, Japan.
 \AND
 \Name{Masashi Sugiyama} \Email{sugi@k.u-tokyo.ac.jp}\\
 \addr RIKEN/The University of Tokyo, Japan.
 }
\begin{document}

\maketitle

\begin{abstract}
Recent years have witnessed a great success of \emph{supervised} deep learning, where predictive models were trained from a large amount of \emph{fully} labeled data. However, in practice, labeling such big data can be very costly and may not even be possible for privacy reasons.
Therefore, in this paper, we aim to learn an accurate classifier without any class labels. 
More specifically, we consider the case where multiple sets of \emph{unlabeled} data and only their \emph{class priors}, i.e., the proportions of each class, are available. 
Under this problem setup, we first derive an unbiased estimator of the classification risk that can be estimated from the given unlabeled sets and theoretically analyze the generalization error of the learned classifier.
We then find that the classifier obtained as such tends to cause overfitting as its empirical risks go negative during training.
To prevent overfitting, we further propose a partial risk regularization that maintains the partial risks with respect to unlabeled datasets and classes to certain levels.
Experiments demonstrate that our method effectively mitigates overfitting and outperforms state-of-the-art methods for learning from multiple unlabeled sets.
\end{abstract}
\begin{keywords}
Unlabeled data; class prior; unbiased estimator; overfitting; regularization.
\end{keywords}

\section{Introduction}
Supervised deep learning approaches are very successful but data-hungry \citep{goodfellow2016deep}.
As it is very expensive and time-consuming to collect full labels for big data, it is desirable for machine learning techniques to work with weaker forms of supervision \citep{10.1093/nsr/nwx106,sugiyama2022machine}. 

In this paper, we consider a challenging weakly supervised multi-class classification problem: instead of fully labeled data, we are given multiple sets of only unlabeled~(U) data for classifier training. We assume that each available unlabeled dataset varies only in class-prior probabilities, and these class priors are known \citep{lu2018minimal, lu2020mitigating}.
Our problem setting traces back to a classical problem of \emph{learning with label proportions} (LLP) \citep{quadrianto2009estimating}; however, there are two key differences between our work and previous LLP studies.
First, most of the existing LLP papers consider U sets sampled from the identical distribution \citep{liu2019learning,pmlr-v129-tsai20a}; while in our case, the U sets are generated from distributions with different class priors.
Second, for the solution, the majority of deep LLP methods are based on the \emph{empirical proportion risk minimization} (EPRM) \citep{yu2013proptosvm,yu2015learning} principle, which aims at predicting accurate label proportions for the U sets; while our method is based on the ordinary \emph{empirical risk minimization}~(ERM) \citep{Vapnik1998} principle, which is superior to EPRM as the consistency of learning can be guaranteed.

Such a framework of learning from multiple unlabeled datasets can find applications in various real-world scenarios.
For example, in politics, predicting the demographic information of users in social networks is critical to policy making \citep{culotta2015predicting};
however, labeling individual users according to their demographic information may violate data privacy, and thus the traditional supervised learning methods cannot be applied.
Fortunately, multiple unlabeled datasets can be obtained from different time points or geographic locations, and their class priors can be collected from pre-existing census data, which suffices for our learning framework.

Under this problem setup, the first challenge is to estimate the classification risk from only unlabeled datasets and their class priors, since the typical ERM approach used in supervised learning cannot be directly applied to this unlabeled setting.
A promising direction to solve the problem is risk rewriting \citep{sugiyama2022machine}, i.e., rewrite the classification risk into an equivalent form that can be estimated from the given data.
Existing risk rewriting results on learning from multiple U sets focused on binary classification \citep{lu2018minimal, lu2020mitigating}.
In this paper, we extend them and derive an unbiased risk estimator for multi-class classification from multiple U sets. In this way, ERM is enabled and statistical consistency \citep{mohri2018foundations} is guaranteed, i.e., the risk of the learned classifier converges to the risk of the optimal classifier in the model, as the amount of training data approaches infinity.

However, in practice, with only finite training data, we found that the risk rewriting method suffers from severe overfitting, which is another challenge we are facing.
For tackling overfitting, a common practice is to use regularization techniques to limit the flexibility of the model. 
Unfortunately, we also observed that these general-propose regularization techniques cannot mitigate this overfitting.
Based on another observation that the empirical training risks can go negative during training, we conjecture that the overfitting is due to the negative terms included in the rewritten risk function: when unbounded loss functions, e.g., the cross-entropy loss, are used, the empirical training risk can even diverge to negative infinity. Therefore, it is necessary to fix this negative risk issue.

Recently, \citet{ishida2020we} proposed the \emph{flooding} regularization to combat overfitting in supervised learning, which intentionally prevents further reduction of the empirical training risk when it reaches a reasonably small value called the \emph{flood level}.
We could naively apply the flooding regularization in our unlabeled setting and observed that overfitting can be mitigated to some extent. However, its effect was still on the same level as naive early stopping
\citep{morgan1989generalization}.

To further improve the performance, we propose a fine-grained \emph{partial risk regularization} inspired by the flooding method.
Specifically, we decompose the rewritten risk into partial risks, i.e., the risks regarding all data in each U set being in same class, and then maintain each partial risk to a certain level as flooding does.
We find that under some idealistic conditions, the optimal levels can be determined by the class priors, which serves as a guidance for selecting the hyper-parameters of these levels.
Experimental results show that our proposed partial risk regularization approach can successfully mitigate overfitting and outperform existing methods for learning from multiple U sets.

The rest of the paper is organized as follows. In Section~\ref{sec: problem setting}, we formulate the problem of learning from multiple U sets, and in Section~\ref{Unbiased Learning}, we propose an ERM method and analyze its estimation error. The partial risk regularizer is proposed in Section~\ref{sec: regularization},
and experimental results are discussed in Section~\ref{sec: experiment}. Finally, conclusions are given in Section~\ref{sec: concl}.

\section{From Supervised Learning to Learning from Multiple Unlabeled Datasets}
\label{sec: problem setting}
To begin with, let us consider a multi-class classification task with the input feature space $\cX\subset \mathbb{R}^d$ and the output label space $\cY=\{ 1,2,\ldots,K\}=:[K]$, where $d$ is the input dimension and $K$ is the number of classes.
Let $\bx\in\cX$ and $y\in\cY$ be the input and output random variables following an underlying joint distribution with density $p(\bx,y)$, which can be decomposed using the class priors $\{\pi_k=\Pr(y=k)\}_{k=1}^K$ and the class-conditional densities $\{p_k(\bx)=p(\bx\mid y=k)\}_{k=1}^K$ as $p(\bx,y)=\sum_{k=1}^K \pi_k p_k(\bx)$.
The goal of multi-class classification is to learn a classifier $\bg:\cX\rightarrow \mathbb{R}^K$ that minimizes the following \emph{classification error}, also known as the \emph{risk}:
\begin{align}
    \label{eq: risk}
    R(\bg) \coloneqq \bE_{(\bx, y)\sim p(\bx, y)}\left[\ell(\bg(\bx), y)\right],
\end{align}
where $\bE$ denotes the expectation, and $\ell:\mathbb{R}^K\times\mathcal{Y}\to\mathbb{R}_{+}$ denotes a real-valued \emph{loss function} that measures the discrepancy between the true label $y$ and its prediction $\bg(\bx)$.
Typically, the predicted label is given by $\widehat{y}=\argmax_{k\in [K]}(\bg(\bx))_k$, where $(\bg(\bx))_k$ is the $k$-th element of $\bg(\bx)$.
Note that when \eqref{eq: risk} is used for evaluation, $\ell$ is often chosen as the \emph{zero-one loss} $\ell_\mathrm{01}(\bg(\bx), y)=I(\widehat{y}\neq y)$,
where $I$ is the indicator function;
when \eqref{eq: risk} is used for training, $\ell_\mathrm{01}$ is replaced by a \emph{surrogate loss} due to its difficulty for optimization, e.g., by the softmax cross-entropy loss.

As the density $p(\bx, y)$ in \eqref{eq: risk} remains unknown, in traditional supervised learning, we normally require a vast amount of fully labeled training data $\cD\coloneqq\left\{\left(\bx_i,  y_i\right)\right\}_{i=1}^{n}$ (where $n$ is the training sample size and is assumed to be a sufficiently large number) drawn independently from  $p(\bx,y)$ to learn an accurate classifier \citep{wahba1990spline,Vapnik1998,hastie2009elements,sugiyama2015introduction}.
With such supervised data, \emph{empirical risk minimization}~(ERM) \citep{Vapnik1998} is a common practice that approximates \eqref{eq: risk} by replacing the expectation with the average over the training data:
$\widehat{R}(\bg) = \frac{1}{n}\sum_{i=1}^{n}\left[\ell(\bg(\bx_i), y_i)\right]$; then minimizing the empirical risk $\widehat{R}(\bg)$ over a class of functions, also known as the \emph{model} $\mathcal{G}$: $\widehat{\bg}=\argmin_{\bg\in\mathcal{G}}\widehat{R}(\bg)$.

However, in practice, collecting massive fully labeled data may be difficult due to the high labeling cost.
In this paper, we consider a challenging setting of learning from multiple unlabeled datasets where \emph{no} explicit labels are given.
More specifically, assume that we have access to $M$ sets of unlabeled samples $\cD_\text{U}\coloneqq\bigcup_{m=1}^{M}\cX_m$, and each unlabeled set $\cX_m=\{\bx_{m,i}\}_{i=1}^{n_m}$ is a collection of $n_m$ data points drawn from a mixture of class-conditional densities $\{p_k=p(\bx\mid y=k)\}_{k=1}^K$:
\begin{align}
    \bx_{m,i}\overset{\text{i.i.d.}}\sim q_m(\bx)=\sum_{k=1}^{K} \theta_{m,k}p_{k},
\label{eqn:MCD framework}
\end{align}
where $\theta_{m,k}\geq0$ denotes the $k$-th class prior of the $m$-th unlabeled set. The class priors are assumed to form a full column rank matrix $\Theta\coloneqq (\theta_{m, k}) \in \mathbb{R}^{M \times K}$ with the constraint that $\sum_{k=1}^K\theta_{m,k}=1$.
Through the paper, we assume that the class priors of each unlabeled set are given, including training class priors $\Theta$ and test class priors $\{\pi_k\}_{k=1}^K$.%
\footnote{Later in Section~\ref{sec:robustness}, we will experimentally show that the use of approximate class-priors is sufficient to obtain an accurate classifier.}
Our goal is the same as standard supervised learning where we wish to learn a classifier $\bg$ that generalizes well with respect to $p(\bx,y)$, despite the fact that it is unobserved and we can only access unlabeled training sets $\{\cX_m\}_{m=1}^{M}$.
\section{Learning from Multiple Unlabeled Datasets}
\label{Unbiased Learning}
In this section, we derive an unbiased estimator of the classification risk in learning from multiple unlabeled datasets and analyze its estimation error.

\subsection{Estimation of the Multi-Class Risk}
\label{subsec: rewritten risk}
Unbiased risk estimator based approaches have shown promising results in many weakly-supervised learning scenarios \citep{du2014analysis, du2015convex, ishida2017binary, lu2018minimal, van2017theory},
most of which mainly focused on binary classification problems. 
In what follows, we extend the results of unbiased risk estimation in their papers to multi-class classification from multiple unlabeled datasets.

The key technique for deriving unbiased risk estimators is risk rewriting, which enables the risk to be calculated only from observable distributions. 
In the context of our setting, risk rewriting means to evaluate the risk using $\{q_m\}_{m=1}^M$ instead of unknown $\{p_k\}_{k=1}^K$. 
Formally, we extend the definition of rewritability of the risk introduced in \citet{lu2018minimal} to our setting and show that $R(\bg)$ is rewritable as follows.

\begin{definition}
We say that $R(\bg)$ is rewritable given $\{q_m\}_{m=1}^M$, if and only if there exist constants $\{w_{m, k}\}_{m\in[M],k\in[K]}$ such that for any model $\bg$ it holds that
\begin{align}
    \label{eq: rewritable}
R(\bg) = \sum_{m=1}^M \bE_{\bx \sim q_m}\left[\bar{\ell}_m(\bg(\bx))\right],
\end{align}
where $\bar{\ell}_m(\bz) = \sum_{k=1}^K w_{m, k} \ell(\bz, k)$.
\end{definition}

\begin{theorem}
\label{thm: risk rewriting}
Assume that $\Theta$ has full column rank. Then $R(\bg)$ is rewritable by letting
\begin{align}
    \label{eq: risk rewriting}
    W = (\Pi\Theta^\dagger)^\top,
\end{align}
where $W \coloneqq (w_{m, k}) \in \mathbb{R}^{M \times K}$, $\Pi \coloneqq \operatorname{diag} \{\pi_1, \pi_2, \ldots, \pi_K\}$, and $\dagger$ denotes the Moore–Penrose generalized inverse.
\end{theorem}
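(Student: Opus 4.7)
The plan is to reduce the claim to a linear-algebraic identity matching coefficients on the unknown class-conditional expectations. First I would expand the left-hand side using the standard decomposition $p(\bx,y)=\sum_{k=1}^K \pi_k p_k(\bx)$, obtaining
\begin{align*}
R(\bg) = \sum_{k=1}^K \pi_k \,\bE_{\bx\sim p_k}\!\left[\ell(\bg(\bx),k)\right].
\end{align*}
Then I would expand the right-hand side of \eqref{eq: rewritable} using $q_m=\sum_{j=1}^K \theta_{m,j} p_j$, which after swapping the finite sums yields
\begin{align*}
\sum_{m=1}^M \bE_{\bx\sim q_m}\!\left[\bar\ell_m(\bg(\bx))\right]
= \sum_{j=1}^K\sum_{k=1}^K \Bigl(\sum_{m=1}^M w_{m,k}\theta_{m,j}\Bigr)\,\bE_{\bx\sim p_j}\!\left[\ell(\bg(\bx),k)\right].
\end{align*}

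Next I would argue that the identity must hold for \emph{every} model $\bg$, and in particular for models that realize arbitrary values of $\ell(\bg(\bx),k)$ on the supports of the various $p_j$. By treating the quantities $\bE_{\bx\sim p_j}[\ell(\bg(\bx),k)]$ as free parameters indexed by $(j,k)$, matching coefficients forces the sufficient (and, under this viewpoint, necessary) condition
\begin{align*}
\sum_{m=1}^M w_{m,k}\theta_{m,j} = \pi_k\,\delta_{jk}, \qquad j,k\in[K],
\end{align*}
which in matrix form is precisely $W^\top\Theta=\Pi$.

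Finally I would solve this matrix equation. Since $\Theta\in\mathbb{R}^{M\times K}$ has full column rank by assumption, its Moore--Penrose pseudoinverse satisfies $\Theta^\dagger\Theta=I_K$. Setting $W^\top=\Pi\Theta^\dagger$, equivalently $W=(\Pi\Theta^\dagger)^\top$ as stated in \eqref{eq: risk rewriting}, gives $W^\top\Theta=\Pi\Theta^\dagger\Theta=\Pi$, so the required coefficient identities hold and $R(\bg)$ admits the rewritten form in \eqref{eq: rewritable}. The calculation is essentially bookkeeping, so no single step is genuinely hard; the only point requiring care is the indexing convention (in particular, verifying that $W^\top\Theta$ and not $\Theta W$ is the correct matrix product after swapping the $m$ and $k$ summations), and noting that full column rank, rather than invertibility of $\Theta$, is exactly what is needed for $\Theta^\dagger$ to act as a left inverse.
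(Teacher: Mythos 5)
Your proof is correct and follows essentially the same route as the paper's: expand both sides over the class-conditional densities, match coefficients to reduce the claim to $W^\top\Theta=\Pi$, and use the full-column-rank assumption so that $\Theta^\dagger\Theta=I_K$ makes $W^\top=\Pi\Theta^\dagger$ a solution. If anything, your write-up is slightly cleaner in logical direction (you establish sufficiency of the coefficient identity directly, whereas the paper phrases the matching step starting from ``if rewritable'') and in making explicit why full column rank, rather than invertibility, is the exact hypothesis needed.
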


Based on Theorem~\ref{thm: risk rewriting}, we can immediately obtain a rewritten risk function as
\begin{align}
    \label{eq: rewritten risk}
    R_{\text{U}}(\bg) = \sum_{m=1}^M \sum_{k=1}^K w_{m, k} \bE_{q_m}\left[ \ell(\bg(\bx), k) \right],
\end{align}
and its unbiased risk estimator
\begin{align}
    \label{eq: unbiased risk estimator}
    \widehat{R}_{\text{U}}(\bg) = \sum_{m=1}^M \sum_{k=1}^K \frac{w_{m,k}}{n_m} \sum_{i=1}^{n_m} \ell(\bg(\bx_{m, i}), k),
\end{align}
where $w_{m, k}$'s are given by \eqref{eq: risk rewriting}. 
Then, any off-the-shelf optimization algorithms, such as stochastic gradient descent \citep{robbins1951stochastic} and Adam \citep{kingma2015adam}, can be applied for minimizing the empirical risk given by \eqref{eq: unbiased risk estimator}.

\subsection{Theoretical Analysis}
Here, we establish an estimation error bound for the unbiased risk estimator.

\begin{theorem}
\label{thm: bound}
Let $\mathfrak{R}_{n_m}(\cH_{k})$ be the Rademacher complexity of $\cH_{k}$ over $q_m(\bx)$ \citep{mohri2018foundations}, i.e.,
$\mathfrak{R}_{n_m}(\cH_{k}) = \bE_{q_m(\bx)} \bE_{\bm{\sigma}} \left[ \sup_{h \in \cH_k} \frac{1}{n_m} \sum_{i=1}^{n_m} \sigma_{i} h(\bx_i) \right]$,
where $\sigma_i$'s are independent uniform random variables taking values in $\{+1, -1\}$ and $\cH_{k} = \left\{ h: \bx \mapsto (\bg(\bx))_{k} | \bg \in \cG \right\}$.
Assume the loss function $\ell(\bg(\bx),y)$ is $L$-Lipschitz with respect to $\bg(\bx)$ for all $y \in \cY$ and upper-bounded by $C_{\ell}$, i.e., $C_{\ell} = \sup_{\bg \in \cG, \bx \in \cX, y \in \cY} \ell(\bg(\bx),y)$. 
$\widehat{\bg}_{\textup{U}}\coloneqq\arg \min_{\bg \in \cG} \widehat{R}_{\text{U}}(\bg)$ is the risk of the learned classifier and $\bg^{*} \coloneqq \arg \min_{\bg \in \cG} R(\bg)$ is the risk of the optimal classifier in the given model class.
Then, for any $\delta > 0$, with probability at least $1 - \delta$,
\begin{align}
    \label{eq: bound of unbiased risk estimator}
    R(\widehat{\bg}_{\textup{U}}) - R(\bg^{*}) \leq 4 \sqrt{2} LKC_{w} \sum_{m=1}^{M} \sum_{k=1}^{K} \mathfrak{R}_{n_m}(\cH_{k})
    + 2C_{w}KC_{\ell} \sqrt{\frac{{}M\ln{\frac{2}{\delta}}}{2N_{\min}}},
\end{align}
where $C_{w} = \max_{m\in[M], k\in[K]} |w_{m, k}|$ and $N_{\min} = \min_{m\in[M]} n_m$.
\end{theorem}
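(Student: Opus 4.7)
The plan is to follow the standard uniform-deviation recipe for estimation-error bounds, specialized to the rewritten risk $R_{\text{U}} = R$ from Theorem~\ref{thm: risk rewriting}. First I would decompose
\begin{align*}
R(\widehat{\bg}_{\text{U}}) - R(\bg^{*})
&= \bigl(R_{\text{U}}(\widehat{\bg}_{\text{U}}) - \widehat{R}_{\text{U}}(\widehat{\bg}_{\text{U}})\bigr)
+ \bigl(\widehat{R}_{\text{U}}(\widehat{\bg}_{\text{U}}) - \widehat{R}_{\text{U}}(\bg^{*})\bigr)
+ \bigl(\widehat{R}_{\text{U}}(\bg^{*}) - R_{\text{U}}(\bg^{*})\bigr),
\end{align*}
observe that the middle term is non-positive by the definition of $\widehat{\bg}_{\text{U}}$, and bound what remains by $2\sup_{\bg\in\cG}|\widehat{R}_{\text{U}}(\bg) - R_{\text{U}}(\bg)|$.

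Next I would control $\sup_{\bg}|\widehat{R}_{\text{U}}(\bg) - R_{\text{U}}(\bg)|$ by McDiarmid's inequality. Since all $M$ samples are independent, and perturbing a single $\bx_{m,i}$ shifts $\widehat{R}_{\text{U}}$ by at most $\frac{1}{n_m}\sum_{k}|w_{m,k}|\cdot 2C_{\ell}\le \tfrac{2KC_{w}C_{\ell}}{n_m}$, the bounded-differences sum is $\sum_{m}n_m\cdot\bigl(\tfrac{2KC_{w}C_{\ell}}{n_m}\bigr)^2 \le \tfrac{4K^2C_w^2C_\ell^2 M}{N_{\min}}$. Thus with probability at least $1-\delta$,
\begin{align*}
\sup_{\bg}\bigl|\widehat{R}_{\text{U}}(\bg) - R_{\text{U}}(\bg)\bigr|
\le \bE\Bigl[\sup_{\bg}\bigl|\widehat{R}_{\text{U}}(\bg) - R_{\text{U}}(\bg)\bigr|\Bigr]
+ C_{w}KC_{\ell}\sqrt{\tfrac{M\ln(2/\delta)}{2N_{\min}}},
\end{align*}
which after multiplying by $2$ yields the second term of \eqref{eq: bound of unbiased risk estimator}.

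It remains to bound the expected supremum by Rademacher complexities. Since \eqref{eq: rewritten risk} is a sum of independent per-dataset averages, standard symmetrization gives
\begin{align*}
\bE\Bigl[\sup_{\bg}\bigl|\widehat{R}_{\text{U}}(\bg) - R_{\text{U}}(\bg)\bigr|\Bigr]
\le 2\sum_{m=1}^{M}\fR_{n_m}\bigl(\bar\ell_m\circ\cG\bigr),
\end{align*}
where $\bar\ell_m\circ\cG=\{\bx\mapsto\sum_k w_{m,k}\ell(\bg(\bx),k):\bg\in\cG\}$. Each $\bar\ell_m$ is, as a function of the vector $\bg(\bx)\in\bR^K$, Lipschitz with constant at most $\sum_{k}|w_{m,k}|L\le KC_{w}L$. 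Applying the vector-valued contraction inequality of Maurer to peel off $\bar\ell_m$ produces
\begin{align*}
\fR_{n_m}\bigl(\bar\ell_m\circ\cG\bigr)
\le \sqrt{2}\,KC_{w}L\,\sum_{k=1}^{K}\fR_{n_m}(\cH_k),
\end{align*}
and summing over $m$ and combining with the factor of $2$ from the initial decomposition gives exactly the first term in \eqref{eq: bound of unbiased risk estimator}.

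The main obstacle is the final step: ordinary Talagrand contraction only handles scalar-input Lipschitz maps, whereas $\bar\ell_m$ acts on the $K$-dimensional vector $\bg(\bx)$, so one must invoke the vector contraction inequality to reduce to the per-coordinate classes $\cH_k$; the constant $\sqrt{2}$ in the final bound traces back to this step. Everything else (the optimality-based decomposition, symmetrization, and McDiarmid) is routine once Theorem~\ref{thm: risk rewriting} is invoked to equate $R_{\text{U}}$ with $R$.
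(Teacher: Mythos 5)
Your proposal follows essentially the same route as the paper's proof: the ERM-based three-term decomposition, McDiarmid's inequality for the deviation from the mean, and symmetrization followed by Maurer's vector-valued contraction (yielding the $\sqrt{2}$ and the $KC_wL$ Lipschitz constant) to reach the per-coordinate classes $\cH_k$, with matching constants throughout. The only blemish is a small internal inconsistency in the McDiarmid step: since $\ell\in[0,C_\ell]$, the per-sample change is at most $KC_wC_\ell/n_m$ rather than $2KC_wC_\ell/n_m$, and it is the former that yields the deviation term $C_wKC_\ell\sqrt{M\ln(2/\delta)/(2N_{\min})}$ that you state (your $2C_\ell$ would double it).
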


The proof of Theorem~\ref{thm: bound} is provided in Appendix~\ref{apd:bound}. Generally, $\mathfrak{R}_{n_m}(\cH_{k})$ can be bounded by $C_{\cG}/ \sqrt{N_{\mathrm{min}}}$ for a positive constant $C_{\cG}$ \citep{golowich2018size, NEURIPS2019_9308b0d6}. 
The Theorem~\ref{thm: bound} shows that as $N_{\mathrm{min}} \rightarrow \infty$, $\widehat{\bg}_{\textup{U}}$ converges to $\bg^{*}$.

\section{Partial Risk Regularization}
\label{sec: regularization}
In this section, we empirically analyze the unbiased risk estimator in a finite-sample case, and then propose partial risk regularization for further boosting its performance.

\subsection{Overfitting of the Unbiased Risk Estimator}
\label{subsec: overfitting}
So far, we have obtained an unbiased risk estimator for learning from unlabeled datasets and derived an estimation error bound which guarantees the consistency of learning.
However, we find that in practice, with only finite training data, the unbiased method tends to overfit especially when flexible models are used (see Figure~\ref{fig:Reguarization} (a)). 


For reducing overfitting, a common method is to use regularization techniques to limit the flexibility of the model. Popular choices of general-propose regularization techniques are dropout and weight decay \citep{goodfellow2016deep}. In Figure~\ref{fig:Reguarization},%
\footnote{The class priors $\bm{\Theta}$ were set to be an asymmetric diagonal dominated square matrix, see the detailed experimental setup in Section~\ref{Comparison with State-of-the-Art Methods}.}
(b) and (c) show the results of applying them on MNIST. For the dropout experiment, we fixed the weight decay parameter to be $1 \times 10^{-5}$ and changed the dropout probability from $0$ to $0.8$. For the weight decay experiment, we fixed the dropout probability to be $0$ and changed the weight decay parameter from $1 \times 10^{-5}$ to $1 \times 10^{1}$. 
The results show that general-propose regularization techniques cannot mitigate the overfitting of the unbiased risk estimation method in learning from multiple unlabeled datasets.

To further analyze this phenomenon, \footnote{This phenomenon is also observed in other weakly supervised learning settings using the unbiased risk estimator method, for example, in learning from positive and unlabeled data \citep{kiryo2017positive} and learning from two unlabeled datasets \citep{lu2020mitigating}.}
let us take a close look at the unbiased risk estimator (\ref{eq: unbiased risk estimator}). We find that the coefficients $w_{m, k}$'s given by Theorem~\ref{thm: risk rewriting} are not necessarily non-negative, some of which are exactly negative numbers.
These negative terms may lead the empirical training risk to go negative during training (see Figure~\ref{fig:Reguarization} (a)).
When overparameterized models, e.g., deep neural networks, and unbounded loss functions, e.g., the cross-entropy loss, are used, the empirical training risk may even diverge to negative infinity; the model can memorize all training data and become overly confident.
This may be the reasons why severe overfitting occurs here and why general-propose regularizations cannot solve it.

\begin{figure*}[t]
    \centering
    \subfigure[Negative training risk]{\includegraphics[height=3.5cm,width=5cm]{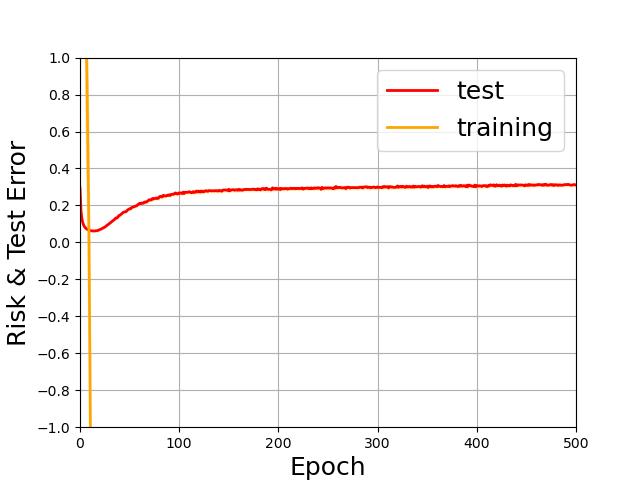}}
    \hfill
    \subfigure[Dropout]{\includegraphics[height=3.5cm,width=5cm]{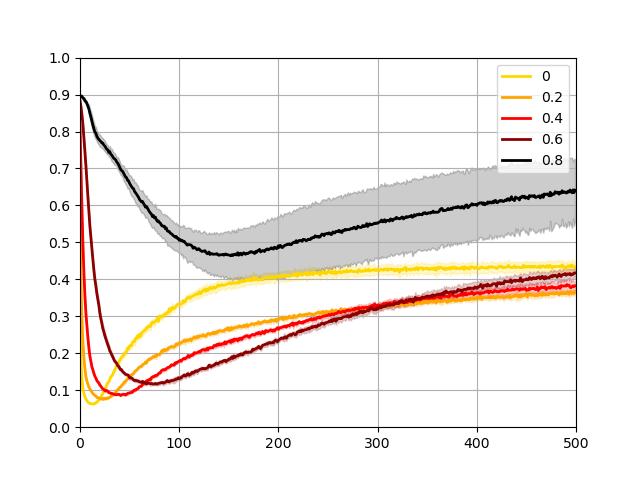}}
    \hfill
    \subfigure[Weight decay]{\includegraphics[height=3.5cm,width=5cm]{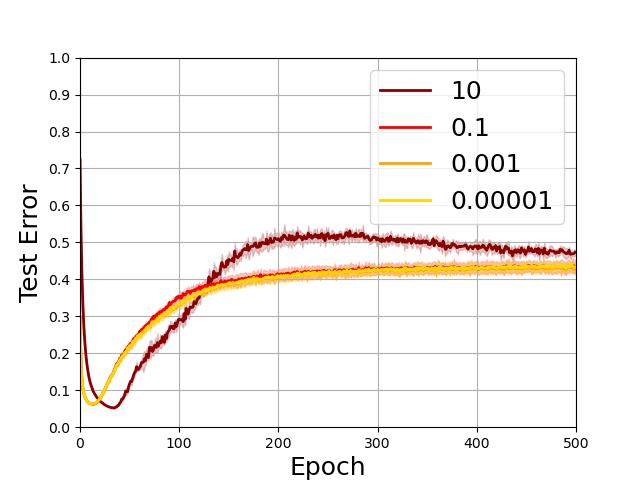}}
    \caption{The negative training risk and overfitting problem on test data are shown in (a). A 5-layer multi-layer perception (MLP) was trained by Adam \citep{kingma2015adam} using the cross-entropy loss as a surrogate loss function. And the test error was calculated by zero-one loss.
    The results of general-propose regularization techniques dropout and weight decay are presented in (b) and (c), respectively. Each method was trained for $500$ epochs.}
    \label{fig:Reguarization}
\end{figure*}

\subsection{Partial Risk Regularization}
Based on the analysis in Section~\ref{subsec: overfitting}, the issue that the empirical training risk goes negative and may even diverge should be fixed.
Flooding \citep{ishida2020we} is a regularization method dedicated to maintaining a fixed-level empirical training risk, which has been shown to mitigate overfitting effectively in the supervised setting.
This method seems promising to solve our problem, however, it is based on supervised learning and needs full labels to calculate the risk.
Thanks to the unbiased risk estimator derived in Section~\ref{subsec: rewritten risk}, we can straightforwardly apply the flooding method to it as follows:
\begin{align}
    \label{eq: flooding}
    R_{\text{U-flood}}(\bg) = |R_{\text{U}}(\bg)-b|+b,
\end{align}
where $b \ge 0$ is a hyperparameter called the flood level.
The experimental results in Section~\ref{sec: experiment} show this simple method relieves overfitting to some extent,
but only performs on the same level of early stopping.

To further boost performance, we decompose the total risk
$R_{\text{U}}(\bg)$ into partial risks:
\begin{align}
    \label{eq: class partial risk}
    R_{q_m,k}(\bg) \coloneqq \bE_{q_m}\left[\ell(\bg(\bx), k) \right],
\end{align}
that is, the risk of considering all data in one unlabeled dataset to belong to the same class.
Then we propose a fine-grained \emph{partial risk regularization}~(PRR) approach that maintains each partial risk in the rewritten risk function~\eqref{eq: rewritten risk} to a certain level:
\begin{equation}
    R_{\text{U-PRR}}(\bg) = \alpha R_{\text{U}}(\bg)+ (1-\alpha) R_{\text{reg}}(\bg),
    \label{eqn: objective}
\end{equation}
where
\begin{equation}
    R_{\text{reg}}(\bg) = \sum_{m=1}^M \sum_{k=1}^K \lambda_{m, k} \left | R_{q_{m},k}(\bg) - b_{m,k} \right |,
    \label{eqn: regularization term}
\end{equation}
$0 \leq \alpha \leq 1$ and $\lambda_{m, k}$'s are trade-off parameters, and $b_{m,k}$'s are the flood levels for the corresponding partial risks.
To select the flood levels, we provide the following lemma which studies the relationship between them and the Bayes optimal classifier $\bg^\star$ where $R(\bg^\star) = \inf_{\bg} R(\bg)$.

\begin{lemma}
Let $R_{p,k}(\bg)=\bE_{p}\left[\ell(\bg(\bx), k)\right]$ where $p$ is any probability density function, $k=1,\ldots,K$, and $m=1,\ldots,M$.
We have
$R_{q_m,k}(\bg^\star)=\sum_{i=1}^{K} \theta_{m,i} R_{p_i,k}(\bg^\star)$.
When all classes are separable, we have
$R^{01}_{q_m,k}(\bg^\star) = 1 - \theta_{m,k}$,
where $R^{01}_{q_m,k}(\bg)$ denotes $R_{q_m,k}(\bg)$ calculated by the zero-one loss.
\label{lemma: float condition}
\end{lemma}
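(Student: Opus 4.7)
The plan is to prove the two claims in order, using the first as a stepping stone for the second.

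For the first identity, I would simply substitute the mixture decomposition $q_m(\bx) = \sum_{i=1}^K \theta_{m,i} p_i(\bx)$ into the definition of $R_{q_m,k}(\bg^\star)$ written as an integral against $q_m$, and then interchange the finite sum with the integral. This yields $R_{q_m,k}(\bg^\star) = \sum_{i=1}^K \theta_{m,i} R_{p_i,k}(\bg^\star)$ directly from linearity of expectation. Notice that this step is loss- and classifier-agnostic: it holds for every $\bg$ and every loss $\ell$, so the optimality of $\bg^\star$ is not yet needed, and the identity follows purely from the mixture structure assumed in \eqref{eqn:MCD framework}.

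For the second claim, I would combine the identity above with a characterization of the Bayes optimal classifier under separability. I would read ``all classes are separable'' as meaning that the class-conditional densities $\{p_i\}_{i=1}^K$ have essentially disjoint supports, i.e.\ $p_i(\bx)p_j(\bx)=0$ almost everywhere for $i\neq j$. Under this condition, on the support of $p_i$ the posterior concentrates on class $i$, so the Bayes decision rule $\widehat{y}^\star(\bx)=\argmax_k(\bg^\star(\bx))_k$ predicts $i$ almost surely when $\bx\sim p_i$. Consequently, under the zero-one loss, $R^{01}_{p_i,k}(\bg^\star)=\Pr_{\bx\sim p_i}(\widehat{y}^\star(\bx)\neq k)=I(i\neq k)$. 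Plugging this into the first identity gives
\begin{equation*}
R^{01}_{q_m,k}(\bg^\star) = \sum_{i=1}^K \theta_{m,i}\, I(i\neq k) = \sum_{i\neq k}\theta_{m,i} = 1-\theta_{m,k},
\end{equation*}
where the last equality uses the row-stochasticity constraint $\sum_{i=1}^K \theta_{m,i}=1$ from the problem setting.

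The main (and rather mild) obstacle is pinning down ``separable'' precisely enough for the zero-error conclusion to be unambiguous. The cleanest formulation is the disjoint-support one above, which immediately forces the Bayes classifier to make no mistakes on any single class-conditional and reduces the proof to the one-line computation. Beyond this conceptual choice, every step is a routine application of linearity of expectation and the indicator identity for the zero-one loss, so I do not anticipate further technical difficulty.
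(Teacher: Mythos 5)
Your proposal is correct and follows essentially the same route as the paper's proof: linearity of expectation over the mixture $q_m=\sum_i\theta_{m,i}p_i$ for the first identity, then the observation that under separability the Bayes classifier satisfies $R^{01}_{p_i,k}(\bg^\star)=I(i\neq k)$, and finally $\sum_{i\neq k}\theta_{m,i}=1-\theta_{m,k}$ from row-stochasticity. Your explicit disjoint-support reading of ``separable'' is a slightly more careful statement of a step the paper takes for granted, but it is not a different argument.
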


Guided by Lemma~\ref{lemma: float condition}, in order to push $\bg$ towards the optimal classifier $\bg^\star$, we propose to choose $b_{m,k} = 1 - \theta_{m,k}$ as the flood level of $R^{01}_{q_m,k}(\bg)$, and replace $R_{q_m,k}(\bg)$ by $R^{01}_{q_m,k}(\bg)$ in equation~\eqref{eqn: regularization term}. 
Therefore, the empirical learning objective of PRR approach is
\begin{equation}
    \widehat{R}_{\text{U-PRR}}(\bg) = \alpha \widehat{R}_{\text{U}}(\bg)+ (1-\alpha) \widehat{R}_{\text{reg}}(\bg),
    \label{eqn: empirical objective}
\end{equation}
where 
\begin{equation}
    \widehat{R}_{\text{reg}}(\bg) = \sum_{m=1}^M \sum_{k=1}^K \lambda_{m, k} \left | \widehat{R}^{01}_{q_{m},k}(\bg) - (1-\theta_{m,k}) \right |,
    \label{eqn: empirical regularization term}
\end{equation}
and
\begin{equation}
\widehat{R}^{01}_{q_{m},k}(\bg) = \frac{1}{n_m} \sum_{i=1}^{n_m} \ell_{01}(\bg(\bx_{m, i}), k).
\end{equation}

In the implementation (see Algorithm~\ref{alg:example}), we use the zero-one loss for determining the sign of the term in the absolute function and use a surrogate loss in calculating the gradients due to the difficulty of optimizing the zero-one loss. For dealing with absolute functions, when the term in it is greater than $0$, we perform gradient descent as usual; when the term is less than $0$, we perform gradient ascent instead and add a hyperparameter $s_\mathrm{GA}$ in the gradient ascent process. 

In addition, we have to determine hyperparameters $\lambda_{m,k}$ in the regularization part.
Here, we set them as $\lambda_{m,k} = \left |w_{m,k}  \right |$ since choosing such a large number of hyperparameters separately is not straightforward. 
This choice means that the term that receives more attention in the unbiased risk estimator also receives more attention in partial risk regularization. 

\begin{algorithm}[t]
\small
\textbf{Input}: $M$ sets of U training data $\cD_\text{U} = \bigcup_{m=1}^{M}\cX_m$ with known class prior $\Theta$\\
\textbf{Output}: model parameter $\gamma$ for classifier $\bg$
\begin{algorithmic}[1]
    \STATE{Initialize $\gamma$}\\
    \STATE{Let $\mathcal{A}$ be an SGD-like optimizer working on $\gamma$}\\
    \FOR{$t=0$; $t<\textrm{number of epochs}$; $t++$ }
    \STATE{Shuffle $\cD_\mathrm{U}$}\\
    \FOR{$i=0$; $i<\textrm{number of mini-batches}$; $i++$ }
        \STATE{Let $\overline{\cD}_\text{U} = \bigcup_{m=1}^{M}\overline{\cX}_m$ be the current mini-batch}\\
        \STATE{Compute $\widehat{R}_{\text{U}}(\bg)$ followed Eq.~(\ref{eq: unbiased risk estimator})}\\
        \STATE{Compute $\widehat{R}_{\text{reg}}(\bg)$:}\\
        \IF{$\bE_{\overline{\cX}_m}\left[\ell_{01}(\bg(\bx), k)\right] \ge (1 - \theta_{m, k})$}
            \STATE{$\left | \bE_{\overline{\cX}_m}\left[\ell(\bg(\bx), k)\right] - (1 - \theta_{m, k})\right | \gets \bE_{\overline{\cX}_m}\left[\ell(\bg(\bx), k)\right] - (1 - \theta_{m, k})$}\\
        \ELSE
            \STATE{$\left | \bE_{\overline{\cX}_m}\left[\ell(\bg(\bx), k)\right] - (1 - \theta_{m, k})\right | \gets -s_\mathrm{GA}(\bE_{\overline{\cX}_m}\left[\ell(\bg(\bx), k)\right] - (1 - \theta_{m, k}))$}\\
        \ENDIF
        \STATE{Compute $\widehat{R}_{\text{U-PRR}}(\bg)$} followed Eq.~(\ref{eqn: empirical objective})\\
        \STATE{Compute gradient $\bigtriangledown_\gamma\widehat{R}_{\text{U-PRR}}(\bg)$}\\
        \STATE{Update $\gamma$ by $\mathcal{A}$}
    \ENDFOR
    \ENDFOR
\end{algorithmic}
\caption{Partial risk regularization approach for learning from multiple U datasets}
\label{alg:example}
\end{algorithm}

\section{Experiments}
\label{sec: experiment}
In this section, we verify the effectiveness of the proposed method\footnote{Our implementation is available at \href{https://github.com/Tang-Yuting/U-PRR}{https://github.com/Tang-Yuting/U-PRR}.} on various datasets. We also show the robustness against noisy class priors of the proposed method.

\subsection{Experimental Setup}
We describe the details of the experimental setup as follows.

\paragraph{Datasets}
We used widely adopted benchmarks: Pendigits \citep{alimoglu1996methods}, USPS \citep{USPS}, MNIST \citep{lecun1998gradient}, Fashion-MNIST \citep{xiao2017fashion}, Kuzushiji-MNIST \citep{clanuwat2018deep} and, CIFAR-10 \citep{krizhevsky2009learning}. Table \ref{tab:dataset} briefly summaries the benchmark datasets.

    \begin{table*}[t]
    \centering
    \caption{Specification of datasets and corresponding models.}
    \scalebox{0.7}{\begin{tabular}{c|ccccc}
    \toprule
    Dataset& Feature& Class& Training Data& Test Data& Model\\
    \midrule 
    Pendigits& 16& 10& 7,494& 3,498& FC with ReLU (depth 3)\\
    USPS& 16*16& 10& 7,291& 2,007& FC with ReLU (depth 3)\\
    MNIST& 28*28& 10& 60,000& 10,000& FC with ReLU (depth 5)\\
    Fashion-MNIST& 28*28& 10& 60,000& 10,000& FC with ReLU (depth 5)\\
    Kuzushiji-MNIST& 28*28& 10& 60,000& 10,000& FC with ReLU (depth 5)\\
    CIFAR-10& 32*32*3& 10& 50,000& 10,000& ResNet (depth 20)\\
    \bottomrule
    \end{tabular}}
    \label{tab:dataset}
    \end{table*}
The $m$-th U set contains $n_{m}$ samples where its proportions $\theta_{m,k}$ were randomly chosen from class $k$ for $k=1,\ldots,K$. The size of each unlabeled dataset $n_{m}$ is set to $1/M$ of the total number of training datasets in all these settings.

\paragraph{Baselines}
To better analyze the performance of the proposed unbiased risk estimator~\eqref{eq: unbiased risk estimator} (\emph{Unbiased}) and partial risk regularization approach~\eqref{eqn: empirical objective} (\emph{U-PRR}), we compared them with following baseline methods:
    \begin{itemize}
        \item Biased proportion (\emph{Biased}): We consider the label of the largest category in each unlabeled dataset as the label of all samples in it, and perform supervised learning from such pseudo-label samples.

        \item Proportion loss (\emph{Prop}/\emph{Prop-CR}): \emph{Prop} uses class priors of each unlabeled dataset as weak supervision and aims to minimize the difference between true class priors and the predicted class priors \citep{yu2015learning}. \emph{Prop-CR} builds upon the proportion loss baseline by adding a consistency regularization term \citep{pmlr-v129-tsai20a}.
        

        
        \item Corrected unbiased risk estimator (\emph{U-correct}): We
        apply a corrected risk estimation method \citep{lu2020mitigating}, which aims to correct partial risks corresponding to each class to be non-negative, to the unbiased risk estimator.
        The learning objective is $\widehat{R}_{\text{U-correct}}(\bg) = \sum_{k=1}^K \left|  \sum_{m=1}^M w_{m, k} \frac{1}{n_m}\sum_{i=1}^{n_m}\ell(\bg(\bx_{m, i}), k)\right|$.
        
        \item Unbiased risk estimator with early stopping (\emph{U-stop}): We apply the early stopping on the proposed unbiased risk estimator, and stop the training when the empirical risk goes negative since \citet{lu2020mitigating} showed a high co-occurrence between overfitting and negative empirical risk.
        
        \item Unbiased risk estimator with flooding (\emph{U-flood}) \citep{ishida2020we}: We directly apply the flooding method to the unbiased risk estimator. The learning objective is $\widehat{R}(\bg) = |\widehat{R}_{\text{U}}(\bg)-b|+b$. We choose $b$ from $\{0, 0.05, 0.1\}$ and report the best test error.
    \end{itemize}

\paragraph{Training Details}
The models are described in Table \ref{tab:dataset}, where FC refers to the fully connected neural network and ResNet refers to the residual network \citep{he2016deep}. As a common practice, Adam \citep{kingma2015adam} with the cross-entropy loss was used for optimization. We ran each experiment five times, and we trained the model for $500$ epochs on all datasets 
except that CIFAR-10 only used $200$ epochs since $200$ epochs were sufficient for convergence.
The classification error rate was used for evaluating the test performance.
The learning rates were chosen from $\{5\times10^{-5}, 1\times10^{-4}, 2\times10^{-4}, 5\times10^{-4}, 1\times10^{-3} \}$ and the batch sizes were chosen from $\{\frac{n}{500}, \frac{n}{200}, \frac{n}{100}, \frac{n}{50}, \frac{n}{20}, \frac{n}{10}\}$, where $n$ is the size of the training dataset.
In the PRR method, $\alpha$'s were chosen from $\{ 0.1, 0.3, 0.5, 0.7, 0.9 \}$ and $s_\mathrm{GA}$'s were chosen from $\{ 0.1, 0.2, 0.5, 1, 2, 5, 10 \}$.
Note that for fair comparison, we used the same model and the same hyperparameters for the implementation of all methods in each benchmark. The implementation in our experiments was based on PyTorch\footnote{See \href{https://pytorch.org}{https://pytorch.org}.}, and the experiments were conducted on an NVIDIA Tesla P100 GPU.

\subsection{Comparison with Baseline Methods}
\label{Comparison with State-of-the-Art Methods}
Here, we report the final test error (Err) and the test error drop ($\Delta_{E}$), which is the difference between the smallest test error among all training epochs and the test error at the end of training, for the baseline methods and proposed methods.
We designed different $\Theta$ for evaluating the proposed methods in various scenarios:

\begin{itemize}
    \item Symmetric square class-prior matrix. It is defined as
    \begin{equation*}
    \begin{aligned}
        \theta_{m,k}
        = \begin{cases}
    a + b & m = k, \\ 
    b & m \ne k, \\
    \end{cases}
        \label{eqn:label proportions of 10 bags}
    \end{aligned}
    \end{equation*}
    where $a > 0$ and $b > 0$ are constants satisfying $a + Kb = 1$. 
    We tested our proposed methods under two different training class-prior settings: $(a, b)$ was chosen as $(0.5, 0.05)$ and $(0.1, 0.09)$. The experimental results are reported in Table~\ref{tab:experiments on symmetric class-prior matrix}. 
    
    \begin{table}[t]
    \centering
    \caption{Experimental results with a symmetric class-prior matrix. Means (standard deviations) of the classification error (Err) and the drop ($\Delta_{E}$) over five trials in percentage. The best and comparable methods based on the paired t-test at the significance level $5\%$ are highlighted in boldface.}
    \scalebox{0.55}{\begin{tabular}{c|c|cc|cc|cc|cc|cc|cc|cc|cc}
    \toprule
    \multirow{2}*{Dataset}& \multirow{2}*{a,b}&
    \multicolumn{2}{c|}{Biased}&
    \multicolumn{2}{c|}{Prop}&
    \multicolumn{2}{c|}{Prop-CR}&
    \multicolumn{2}{c|}{Unbiased}&
    \multicolumn{2}{c|}{U-correct}&
    \multicolumn{2}{c|}{U-stop}&
    \multicolumn{2}{c|}{U-flood}&
    \multicolumn{2}{c}{U-PRR}\\
    & & Err& $\Delta_{E}$& Err& $\Delta_{E}$& Err& $\Delta_{E}$& Err& $\Delta_{E}$& Err& $\Delta_{E}$& Err& $\Delta_{E}$& Err& $\Delta_{E}$& Err& $\Delta_{E}$\\
    \midrule  
    \multirow{4}*{Pendigits}& \multirow{2}*{0.5, 0.05}& 11.54& 7.42& \textbf{5.80}& 0.23& 5.85& 0.31& 17.99 & 13.09&\textbf{5.58}& 1.25& 7.80& 0& \textbf{5.29}& 1.18& \textbf{4.61}& 0.26\\
    & & (1.19)& (1.05)& \textbf{(1.08)}& (0.19)& (1.01)& (0.22)& (1.88)& (1.67)& \textbf{(0.48)}& (0.38)& (1.13)& (0)& \textbf{(0.99)}& (0.96)& \textbf{(0.77)}& (0.30)\\
    & \multirow{2}*{0.1, 0.09}& 61.30& 41.27& 37.97& 22.44& 36.20& 20.90& 47.16& 19.45& 36.25& 16.28& 20.12& 0& 29.44& 11.12& \textbf{12.64}& 0.59\\
    & & (1.50)& (2.16)& (1.30)& (0.54)& (1.98)& (1.62)& (2.24)& (2.77)& (2.92)& (1.68)& (1.75)& (0)& (3.34)& (3.96)& \textbf{(2.48)}& (0.39)\\
    \midrule
    \multirow{4}*{USPS}& \multirow{2}*{0.5, 0.05}& 23.23& 14.73& 7.92& 0.62& 6.51& 0.33& 26.98& 18.64& 11.14& 2.11& 9.39& 0& 11.41& 2.53& \textbf{6.01}& 0.30\\
    & & (1.30)& (1.41)& (0.15)& (0.27)& (0.24)& (0.20)& (2.51)& (2.35)& (0.25)& (0.58)& (0.51)& (0)& (0.65)& (0.68)& \textbf{(0.25)}& (0.11)\\
    & \multirow{2}*{0.1, 0.09}& 70.42& 37.39& 66.97& 49.25& 56.52& 44.26& 75.86& 57.28& 69.97& 35.17& 22.32& 0& 56.09& 31.83& \textbf{11.15}& 0.64\\
    & & (0.60)& (1.95)& (3.14)& (3.28)& (3.00)& (3.42)& (1.50)& (1.44)& (2.08)& (3.94)& (2.09)& (0)& (3.42)& (3.16)& \textbf{(1.20)}& (0.12)\\
    \midrule
    \multirow{4}*{MNIST}& \multirow{2}*{0.5, 0.05}& 25.19& 19.02& 5.56& 0.09& 3.86& 0.08& 31.14& 24.93& 5.59& 0.24& 6.67& 0& 6.34& 0.42& \textbf{2.99}& 0.19\\
    & & (0.70)& (0.70)& (0.11)& (0.06)& (0.11)& (0.06)& (0.49)& (0.65)& (0.29)& (0.20)& (0.25)& (0)& (0.33)& (0.22)& \textbf{(0.10)}& (0.07)\\
    & \multirow{2}*{0.1, 0.09}& 78.19& 35.31& 66.21& 50.56& 43.35& 40.37& 82.51& 63.83& 53.16& 24.39& 20.95& 0& 27.72& 6.34& \textbf{5.84}& 0.26\\
    & & (0.37)& (0.91)& (1.60)& (1.38)& (13.76)& (13.77)& (0.52)& (0.52)& (3.41)& (3.31)& (1.31)& (0)& (1.74)& (2.02)& \textbf{(0.19)}& (0.09)\\
    \midrule
    & \multirow{2}*{0.5, 0.05}& 32.57& 17.79& 14.75& 0.29& 15.25& 0.26& 36.03& 21.36& 15.14& 0.75& 14.88& 0.03& 15.95& 0.92& \textbf{13.09}& 0.50\\
    Fashion-& & (0.81)& (0.85)& (0.36)& (0.33)& (0.19)& (0.05)& (0.73)& (0.79)& (0.36)& (0.33)& (0.31)& (0.04)& (0.56)& (0.64)& \textbf{(0.17)}& (0.12)\\
    MNIST& \multirow{2}*{0.1, 0.09}& 79.28& 41.39& 56.34& 35.16& 34.81& 9.76& 82.85& 60.68& 39.45& 14.07& 23.39& 0& 26.43& 2.74& \textbf{21.91}& 2.78\\
    & & (0.54)& (0.88)& (1.15)& (1.37)& (1.60)& (1.94)& (0.64)& (0.62)& (2.95)& (2.87)& (0.72)& (0)& (1.67)& (1.25)& \textbf{(0.90)}& (0.76)\\
    \midrule
    & \multirow{2}*{0.5, 0.05}& 39.35& 15.84& 20.07& 0.41& 17.30& 0.17& 43.39& 21.38& 20.05& 0.96& 22.94& 0& 21.04& 0.99& \textbf{14.33}& 0.88\\
    Kuzushiji-& & (0.49)& (0.55)& (0.53)& (0.22)& (0.91)& (0.17)& (0.70)& (0.93)& (0.40)& (0.33)& (0.60)& (0)& (1.04)& (0.74)& \textbf{(0.14)}& (0.13)\\
    MNIST& \multirow{2}*{0.1, 0.09}& 81.26& 17.19& 73.10& 28.41& 39.33& 11.72& 84.57& 39.26& 61.77& 8.64& 46.57& 0& 47.23& 5.28& \textbf{31.17}& 3.09\\
    & & (0.58)& (1.15)& (2.54)& (2.15)& (3.30)& (3.28)& (0.46)& (0.51)& (2.33)& (1.48)& (1.13)& (0)& (2.38)& (2.92)& \textbf{(0.70)}& (0.89)\\
    \midrule
    \multirow{4}*{CIFAR-10}& \multirow{2}*{0.5, 0.05}& 51.41& 7.65& 65.24& 0.20& 67.76& 0.10& 57.44& 9.39& 57.72& 8.40& 51.94& 3.14& 56.49& 8.06& \textbf{44.75}& 1.37\\
    & & (3.10)& (2.95)& (1.87)& (0.19)& (1.68)& (0.21)& (3.20)& (2.18)& (5.93)& (6.77)& (2.64)& (2.39)& (4.19)& (3.16)& \textbf{(1.85)}& (0.86)\\
    & \multirow{2}*{0.1, 0.09}& 84.67& 16.66& 72.73& 0.41& 74.22& 0.17& 86.82& 19.41& 76.46& 8.96& \textbf{69.99}& 1.99& 71.08& 3.71& \textbf{69.24}& 3.14\\
    & & (0.25)& (1.68)& (2.83)& (0.37)& (1.32)& (0.23)& (1.71)& (1.39)& (2.19)& (1.17)& \textbf{(1.25)}& (1.76)& (2.15)& (2.15)& \textbf{(1.55)}& (1.46)\\
    \bottomrule
    \end{tabular}}
    \label{tab:experiments on symmetric class-prior matrix}
    \end{table}

    \item Asymmetric diagonal-dominated square class-prior matrix. In this setting, the values on the diagonals of the matrix $\Theta$ are larger than the other values, and they can be different from each other. We randomly and uniformly generated the class priors $\theta_{m,k}$ from range $[0, 1/M]$ when $m \neq k$, and set $\theta_{k,k} = 1 - \sum_{m\neq k}\theta_{m,k}$. The experimental results are shown in Table~\ref{tab:experiments on diagonal dominated class-prior matrix}.

    \begin{table}[ht]
    \centering
    \caption{Experimental results on diagonal-dominated square class-prior matrix. Means (standard deviations) of the classification error (Err) and the drop ($\Delta_{E}$) over five trials in percentage. The best and comparable methods based on the paired t-test at the significance level $5\%$ are highlighted in boldface.}
    \scalebox{0.55}{\begin{tabular}{c|cc|cc|cc|cc|cc|cc|cc|cc}
    \toprule
    \multirow{2}*{Dataset}&
    \multicolumn{2}{c|}{Biased}&
    \multicolumn{2}{c|}{Prop}&
    \multicolumn{2}{c|}{Prop-CR}&
    \multicolumn{2}{c|}{Unbiased}&
    \multicolumn{2}{c|}{U-correct}&
    \multicolumn{2}{c|}{U-stop}&
    \multicolumn{2}{c|}{U-flood}&
    \multicolumn{2}{c}{U-PRR}\\
    & Err& $\Delta_{E}$& Err& $\Delta_{E}$& Err& $\Delta_{E}$& Err& $\Delta_{E}$& Err& $\Delta_{E}$& Err& $\Delta_{E}$& Err& $\Delta_{E}$ & Err& $\Delta_{E}$\\
    \midrule  
    \multirow{2}*{Pendigits}& 15.19& 9.56& \textbf{6.75}& 0.35& \textbf{6.58}& 0.45& 27.06& 20.17& 9.75& 1.88& 10.29& 0.01& 7.92& 2.04& \textbf{5.57}& 0.86\\
    & (5.35)& (3.94)& \textbf{(0.59)}& (0.25)& \textbf{(0.48)}& (0.26)& (7.82)& (6.06)& (3.87)& (1.51)& (2.89)& (0.01)& (1.75)& (0.79)& \textbf{(1.78)}& (0.41)\\
    \midrule
    \multirow{2}*{USPS}& 27.85& 17.18& \textbf{10.90}& 2.18& \textbf{8.99}& 1.13& 46.06& 35.98& 26.14& 11.73& 11.27& 0& 14.70& 3.71& \textbf{7.31}& 0.48\\
    & (8.12)& (5.68)& \textbf{(3.45)}& (2.41)& \textbf{(2.64)}& (1.60)& (14.14)& (12.47)& (7.42)& (5.41)& (1.86)& (0)& (4.25)& (2.03)& \textbf{(1.02)}& (0.29)\\
    \midrule
    \multirow{2}*{MNIST}& 28.42& 21.31& 5.70& 0.11& 4.02& 0.05& 43.36& 37.12& 5.96& 0.37& 6.62& 0& 6.10& 0.24& \textbf{3.65}& 0.38\\
    & (0.60)& (0.68)& (0.17)& (0.09)& (0.08)& (0.05)& (2.01)& (1.90)& (0.44)& (0.27)& (0.24)& (0)& (0.30)& (0.21)& \textbf{(0.08)}& (0.08)\\
    \midrule
    Fashion-& 35.27& 19.93& \textbf{14.75}& 0.21& 14.96& 0.09& 45.09& 30.08& 15.35& 0.76& 15.10& 0.09& 15.60& 0.72& \textbf{14.25}& 1.21\\
    MNIST& (0.23)& (0.36)& \textbf{(0.20)}& (0.21)& (0.12)& (0.05)& (0.89)& (0.99)& (0.41)& (0.22)& (0.18)& (0.08)& (0.11)& (0.14)& \textbf{(0.19)}& (0.29)\\
    \midrule
    Kuzushiji-& 38.49& 14.88& 20.77& 0.68& \textbf{17.38}& 0.24& 52.17& 29.19& 20.18& 1.07& 23.89& 0& 20.50& 0.59& \textbf{17.15}& 1.28\\
    MNIST& (0.63)& (0.49)& (0.60)& (0.44)& \textbf{(0.64)}& (0.15)& (0.97)& (0.86)& (0.33)& (0.40)& (1.24)& (0)& (0.47)& (0.52)& \textbf{(0.71)}& (0.45)\\
    \midrule
    \multirow{2}*{CIFAR-10}& 50.05& 6.38& 64.96& 0.14& 68.67& 0.34& 61.45& 10.28& 56.04& 6.07& 53.32& 3.13& 57.49& 7.33& \textbf{45.02}& 1.92\\
    & (1.64)& (1.13)& (2.09)& (0.15)& (2.44)& (0.07)& (7.26)& (7.21)& (2.42)& (1.96)& (0.95)& (0.70)& (1.91)& (1.78)& \textbf{(1.85)}& (0.87)\\
    \bottomrule
    \end{tabular}}
    \label{tab:experiments on diagonal dominated class-prior matrix}
    \end{table}
    
    \item Non-square class-prior matrix. We also conduct experiments when the number of unlabeled datasets $M$ is more than the number of classes $K$. Here, we set $M = 2K$. The class-prior matrix $\Theta$ is a concatenation of two asymmetric diagonal-dominated square matrices. Compared with the previous experiments, this setting is more in line with real-world situations since it is rare that the number of unlabeled datasets is exactly equal to the number of classes.
    The experimental results are reported in Table~\ref{tab:experiments on multiple unlabeled datasets}.

    \begin{table}[ht]
    \centering
    \caption{Experimental results on non-square class-prior matrix. Means (standard deviations) of the classification error (Err) and the drop ($\Delta_{E}$) over five trials in percentage. The best and comparable methods based on the paired t-test at the significance level $5\%$ are highlighted in boldface.}
    \scalebox{0.55}{\begin{tabular}{c|cc|cc|cc|cc|cc|cc|cc|cc}
    \toprule
    \multirow{2}*{Dataset}&
    \multicolumn{2}{c|}{Biased}&
    \multicolumn{2}{c|}{Prop}&
    \multicolumn{2}{c|}{Prop-CR}&
    \multicolumn{2}{c|}{Unbiased}&
    \multicolumn{2}{c|}{U-correct}&
    \multicolumn{2}{c|}{U-stop}&
    \multicolumn{2}{c|}{U-flood}&
    \multicolumn{2}{c}{U-PRR}\\
    & Err& $\Delta_{E}$& Err& $\Delta_{E}$& Err& $\Delta_{E}$& Err& $\Delta_{E}$& Err& $\Delta_{E}$& Err& $\Delta_{E}$& Err& $\Delta_{E}$& Err& $\Delta_{E}$\\
    \midrule  
    \multirow{2}*{Pendigits}& 14.03& 9.65& \textbf{5.04}& 0.46& \textbf{5.05}& 0.46& 23.37& 17.68& 6.51& 1.46& 9.51& 0& 6.90& 2.04& \textbf{4.75}& 0.68\\
    & (3.69)& (2.79)& \textbf{(0.85)}& (0.18)& \textbf{(1.00)}& (0.25)& (6.59)& (5.58)& (1.27)& (0.63)& (2.44)& (0)& (1.81)& (1.49)& \textbf{(1.22)}& (0.77)\\
    \midrule
    \multirow{2}*{USPS}& 30.05& 19.59& 13.38& 4.47& \textbf{8.45}& 1.27& 45.04& 34.82& 21.02& 6.95& 10.89& 0.03& 13.38& 2.59& \textbf{8.00}& 0.82\\
    & (4.94)& (3.71)& (0.59)& (0.50)& \textbf{(0.29)}& (0.26)& (2.00)& (2.31)& (1.15)& (1.51)& (0.65)& (0.04)& (0.74)& (0.88)& \textbf{(0.30)}& (0.22)\\
    \midrule
    \multirow{2}*{MNIST}& 36.15& 26.56& 7.00& 0.69& 5.60& 0.29& 51.55& 43.82& 9.80& 1.19& 8.03& 0& 6.31& 0.60& \textbf{3.32}& 0.22\\
    & (0.43)& (0.41)& (0.40)& (0.39)& (0.14)& (0.08)& (1.45)& (1.57)& (0.68)& (0.78)& (0.30)& (0)& (0.56)& (0.44)& \textbf{(0.13)}& (0.09)\\
    \midrule
    Fashion-& 37.93& 20.60& \textbf{16.01}& 0.55& \textbf{15.27}& 0.23& 54.78& 38.31& 19.51& 2.44& 16.64& 0.02& 17.21& 0.95& \textbf{16.43}& 2.01\\
    MNIST& (0.90)& (0.78)& \textbf{(0.21)}& (0.28)& \textbf{(0.39)}& (0.15)& (2.69)& (2.37)& (0.47)& (0.53)& (0.25)& (0.04)& (0.85)& (0.88)&  \textbf{(0.86)}& (0.97)\\
    \midrule
    Kuzushiji-& 46.61& 19.04& 25.87& 0.49& \textbf{20.28}& 0.84& 57.52& 30.68& 27.39& 2.21& 27.53& 0& 23.33& 1.12& 26.40& 2.95\\
    MNIST& (0.98)& (0.90)& (0.68)& (0.81)& \textbf{(0.58)}& (0.53)& (1.49)& (1.78)& (0.44)& (0.49)& (0.74)& (0)& (0.91)& (0.70)& (0.74)& (1.15)\\
    \midrule
    \multirow{2}*{CIFAR-10}& 55.84& 7.47& 63.45& 0.21& 67.58& 0.10& 71.46& 15.30& 63.98& 9.82& 59.53& 7.05& 59.90& 7.32& \textbf{47.18}& 1.95\\
    & (2.04)& (2.04)& (2.02)& (0.17)& (1.74)& (0.17)& (11.81)& (11.97)& (2.55)& (2.60)& (6.53)& (5.84)& (1.40)& (0.45)& \textbf{(2.20)}& (0.86)\\
    \bottomrule
    \end{tabular}}
    \label{tab:experiments on multiple unlabeled datasets}
    \end{table}

\end{itemize}

The experimental results show: 
the unbiased risk estimator suffers severe overfitting and this issue is significantly alleviated by correction methods (U-correct, U-stop, U-flood and U-PRR) across various scenarios and datasets; 
the performance of directly applying flooding on the unbiased risk estimator can only reach the same level as early stopping; 
the partial risk regularization method outperformed all baseline methods and achieved the best performance for all the datasets and class prior settings.

\subsection{Robustness against Noisy Class Priors}
\label{sec:robustness}

\begin{figure*}[t]
    \centering
    \subfigure[Symmetric(a=0.5)]{\includegraphics[height=2.7cm,width=3.5cm]{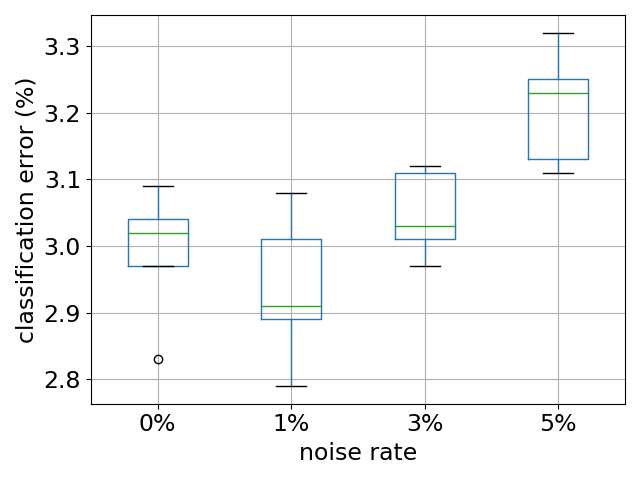}}
    \hfill
    \subfigure[Symmetric(a=0.1)]{\includegraphics[height=2.7cm,width=3.5cm]{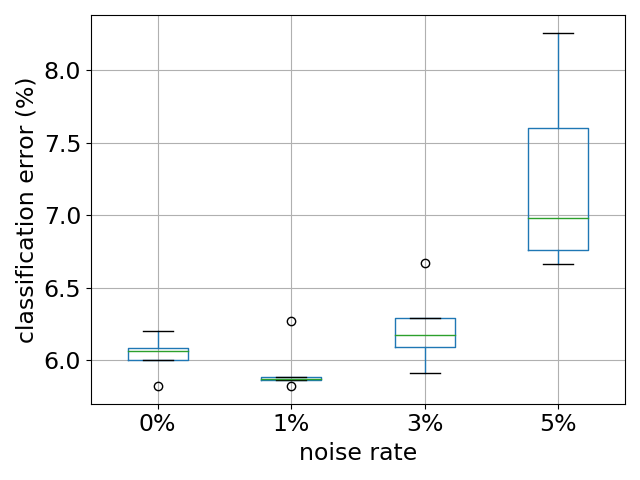}}
    \hfill
    \subfigure[Asymmetric]{\includegraphics[height=2.7cm,width=3.5cm]{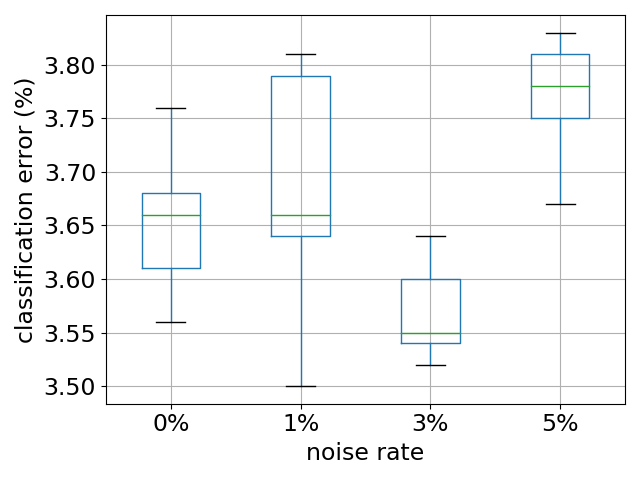}}
    \hfill
    \subfigure[Non-square]{\includegraphics[height=2.7cm,width=3.5cm]{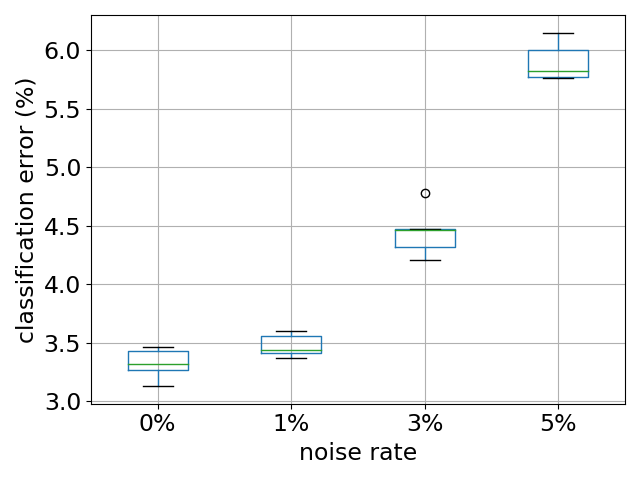}}
    \caption{Experiments of the robustness of the proposed method to different ($1\%$, $3\%$, and $5\%$) noise rate. The class-prior matrices in experiments are symmetric matrix with $a=0.5$ and $a=0.1$, asymmetric diagonal dominated matrix, and non-square matrix. Means (standard deviations) of the classification error over five trials in percentage.}
    \label{fig:Noisy}
\end{figure*}

Hitherto, we have assumed that the values of the class priors were accurately accessible, which may not be true in practice. We designed experiments where random noise was added to each class prior in order to simulate real-world situations. In the experiment, we added different levels ($1\%$, $3\%$, and $5\%$) of perturbation to each element of $\bm{\Theta}$ to obtain $\bm{\Theta^{'}}$, and the noisy $\bm{\Theta^{'}}$ was treated as the true $\bm{\Theta}$ during the whole learning process.
    
The results on the MNIST dataset with different class priors are shown in Fig. \ref{fig:Noisy}, where $0\%$ noise rate means that we used the true class priors. This figure shows that the proposed method is not strongly affected by noise in the class priors and thus it may be safely applied in the real world.
\section{Conclusion}
\label{sec: concl}
In this work, we focused on the problem of multi-class classification from multiple unlabeled datasets. 
We established an unbiased risk estimator and provided a generalization error bound for it.
Based on our empirical observations, the negative empirical training risk was a potential reason why the unbiased risk estimation suffers severe overfitting.
To overcome overfitting, we proposed a partial risk regularization approach customized for learning from multiple unlabeled datasets.
Experiments demonstrated the superiority of our proposed partial risk regularization approach.

\acks{NL and MS were supported by the Institute for AI and Beyond, UTokyo.
MS was also supported by JST AIP Acceleration Research Grant Number JPMJCR20U3, Japan.
TZ was supported by JST SPRING, Grant Number JPMJSP2108.}

\bibliography{acml22.bib}

\appendix

\section{Proof of Theorem~\ref{thm: risk rewriting}}\label{apd:risk rewriting}
Notice that
\begin{align}
\label{eq: risk decomposition}
    R(\bg) = \sum_{k=1}^{K} \pi_k \mathbb{E}_{p_k} [\ell(\bg(\bx),k)] = \sum_{i=1}^{K} \sum_{j=1}^{K} (\Pi)_{i,j} \mathbb{E}_{p_i} [\ell(\bg(\bx),j)].
\end{align}
If the classification risk is rewritable, i.e., 
\begin{align*}
    R(\bg)
    & = \sum_{m=1}^{M} \bE_{q_m}\left[\sum_{k=1}^K w_{m, k} \ell(\bg(\bx), k)\right] = \sum_{m=1}^{M} \sum_{k'=1}^{K} \theta_{m,k'} \mathbb{E}_{p_{k'}}\left[\sum_{k=1}^K w_{m, k} \ell(\bg(\bx), k)\right] \\
    & = \sum_{i=1}^{K} \sum_{j=1}^{K} (W^{\top} \Theta)_{i,j} \mathbb{E}_{p_i} [\ell(\bg(\bx),j)],
\end{align*}
then comparing the coefficients with \eqref{eq: risk decomposition}, we have $W^{\top} \Theta =\Pi$.
Therefore, $R(\bg)$ is rewritable by letting $W = (\Pi\Theta^\dagger)^\top$.

\section{Proof of Theorem~\ref{thm: bound}}\label{apd:bound}
Our proof of the estimation error bound is based on \emph{Rademacher complexity} \citep{bartlett2002rademacher}. First, we show the uniform deviation bound, which is useful to derive the estimation error bound.

\begin{lemma}
\label{lemma: the uniform deviation bound}
For any $\delta > 0$, we have the probability at least $1 - \delta$,
\begin{align}
    \label{eq: the uniform deviation bound}
    \sup_{\bg \in \cG} \left| \widehat{R}_{\textup{U}}(\bg) - R(\bg)\right|
    \leq 2 \sqrt{2} LKC_{w} \sum_{m=1}^{M} \sum_{y=1}^{K} \mathfrak{R}_{n_m}(\cH_{y}) + 2C_{w}KC_{\ell} \sqrt{\frac{{}M\ln{\frac{2}{\delta}}}{2N_{min}}}.
\end{align}
where the probability is over repeated sampling of data for evaluating $\widehat{R}_{\text{U}}(\bg)$.
\end{lemma}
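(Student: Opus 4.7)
The plan is to follow the classical Rademacher-complexity recipe for two-sided uniform deviation bounds, exploiting the fact that $\widehat R_{\textup{U}}(\bg) - R(\bg)$ is a signed linear combination of $MK$ partial empirical deviations, one for each $(m,k)$ pair.

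First I would expand
\begin{equation*}
\widehat R_{\textup{U}}(\bg) - R(\bg) = \sum_{m=1}^{M}\sum_{k=1}^{K} w_{m,k}\left(\frac{1}{n_m}\sum_{i=1}^{n_m}\ell(\bg(\bx_{m,i}),k) - \bE_{q_m}\ell(\bg(\bx),k)\right),
\end{equation*}
and observe that since $|w_{m,k}|\le C_w$ and $\ell\le C_\ell$, replacing any single sample $\bx_{m,i}$ perturbs this expression by at most $KC_wC_\ell/n_m$ uniformly in $\bg$. Applying McDiarmid's inequality to each of the one-sided suprema $\sup_\bg(\widehat R_{\textup{U}}-R)$ and $\sup_\bg(R-\widehat R_{\textup{U}})$ with variance budget $\sum_m n_m(KC_wC_\ell/n_m)^2\le K^2C_w^2C_\ell^2 M/N_{\min}$, and union-bounding at level $\delta/2$ per side, each supremum lies within $KC_wC_\ell\sqrt{M\ln(2/\delta)/(2N_{\min})}$ of its expectation simultaneously with probability at least $1-\delta$; combining them via $\sup_\bg|\widehat R_{\textup{U}}-R|\le\sup_\bg(\widehat R_{\textup{U}}-R)+\sup_\bg(R-\widehat R_{\textup{U}})$ then produces the $2C_wKC_\ell\sqrt{M\ln(2/\delta)/(2N_{\min})}$ tail of the claimed bound.

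It remains to bound the two expected suprema. Standard ghost-sample symmetrization with Rademacher signs $\sigma_{m,i}$, followed by pulling $|w_{m,k}|\le C_w$ out of the supremum, yields (up to universal constants) an upper bound of the form $C_w\sum_{m,k}\fR_{n_m}(\ell\circ\cH^{(k)})$, where $\ell\circ\cH^{(k)}=\{\bx\mapsto\ell(\bg(\bx),k):\bg\in\cG\}$. The final move is contraction: since $\ell(\cdot,k)$ is $L$-Lipschitz on all of $\bR^K$ rather than in a single scalar coordinate, the usual scalar Talagrand contraction lemma does not apply, and I would invoke Maurer's vector-valued contraction inequality to conclude $\fR_{n_m}(\ell\circ\cH^{(k)})\le\sqrt{2}\,L\sum_{y=1}^{K}\fR_{n_m}(\cH_y)$. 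Summing over $k$ supplies the extra factor of $K$ and the Rademacher term takes the claimed form $2\sqrt{2}\,LKC_w\sum_m\sum_y\fR_{n_m}(\cH_y)$.

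The main obstacle is the vector contraction: because $\ell(\bg(\bx),k)$ depends on the entire $K$-dimensional score vector, a coordinate-wise scalar Talagrand contraction would miss the interactions among the outputs of $\bg$. Maurer's vector-valued inequality is what both yields the $\sqrt{2}\,L$ constant and lets the bound be expressed in terms of the per-coordinate Rademacher complexities $\fR_{n_m}(\cH_y)$ appearing in the statement. The bounded-differences check, the McDiarmid application, and the symmetrization are otherwise routine bookkeeping once the linear decomposition above is in hand.
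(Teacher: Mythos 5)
Your proposal follows essentially the same route as the paper's proof: the same linear decomposition, the same bounded-differences constant $KC_wC_\ell/n_m$, McDiarmid applied to each one-sided supremum at level $\delta/2$, ghost-sample symmetrization, and Maurer's vector-valued contraction inequality to produce the $\sqrt{2}\,L\sum_{y}\fR_{n_m}(\cH_y)$ factor. The one slip is the combination step: adding the two one-sided suprema is not a valid pointwise bound on $\sup_{\bg}|\widehat{R}_{\textup{U}}(\bg)-R(\bg)|$ unless both are nonnegative, and even granting it, it would double the Rademacher term to $4\sqrt{2}LKC_w\sum_m\sum_y\fR_{n_m}(\cH_y)$ rather than the claimed $2\sqrt{2}LKC_w\sum_m\sum_y\fR_{n_m}(\cH_y)$; the fix is simply to note that on the intersection of the two high-probability events the absolute deviation equals the \emph{maximum} of the two one-sided suprema and therefore inherits the single-sided bound directly, which yields the stated constants (indeed with only $C_wKC_\ell$ rather than $2C_wKC_\ell$ on the tail term).
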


\begin{proof}
Consider the one-side uniform deviation $\sup_{\bg \in \cG} \widehat{R}_{\text{U}}(\bg) - R(\bg)$. The change of it will be no more than $\frac{C_w K C_\ell}{n_m}$ if some $\bx \in \cX_{m}$ ($1 \leq m \leq M$) is replaced by $\bx'$. Subsequently, \emph{McDiarmid’s inequality} \citep{mcdiarmid1989method} tells us that
\begin{align*}
    \Pr \left\{\sup_{\bg \in \cG} \widehat{R}_{\text{U}}(\bg) - R(\bg) - \bE [\sup_{\bg \in \cG} \widehat{R}_{\text{U}}(\bg) - R(\bg)] \geq \epsilon \right\} 
    \leq \exp(-\frac{2\epsilon^2}{(\sum_m \frac{1}{n_m})(C_w K C_\ell)^2}).
\end{align*}
or equivalently, with probability at least $1 - \delta/2$,
\begin{align*}
    \sup_{\bg \in \cG} \widehat{R}_{\text{U}}(\bg) - R(\bg)
    & \leq \bE [\sup_{\bg \in \cG} \widehat{R}_{\text{U}}(\bg) - R(\bg)]
    + \sqrt{\frac{(\sum_{m}\frac{1}{n_m})(C_w K C_\ell)^2 \ln\frac{2}{\delta}}{2}} \\
    & \leq \bE [\sup_{\bg \in \cG} \widehat{R}_{\text{U}}(\bg) - R(\bg)]
    + C_w K C_\ell \sqrt{\frac{M \ln{\frac{2}{\delta}}}{2 N_{\min}}}.
\end{align*}
By symmetrization \citep{Vapnik1998} and using the same trick in \citet{maurer2016vector}, it is a routine work to show that 
\begin{align*}
    \bE [\sup_{\bg \in \cG} \widehat{R}_{\text{U}}(\bg) - R(\bg)] 
    & \leq 2 \sqrt{2} L \sum_{m=1}^{M} \sum_{k=1}^{K} |w_{m,k}| \sum_{k'=1}^{K} \mathfrak{R}_{n_m}(\cH_{k'}) \\
    & \leq 2 \sqrt{2} LKC_{w} \sum_{m=1}^{M} \sum_{k=1}^{K} \mathfrak{R}_{n_m}(\cH_{k}).
\end{align*}
The one-side uniform deviation $R(\bg) - \sup_{\bg \in \cG} \widehat{R}_{\text{U}}(\bg)$ can be bounded similarly.
\end{proof}

Based on Lemma~\ref{lemma: the uniform deviation bound}, the estimation error bound (\ref{eq: bound of unbiased risk estimator}) is proven through 
\begin{align*}
    R(\widehat{\bg}_{\textup{U}}) - R(\bg^{*})
    & = \left( \widehat{R}_{\text{U}}(\widehat{\bg}_{\textup{U}}) - \widehat{R}_{\text{U}}(\bg^*) \right)
    + \left( R(\widehat{\bg}_{\textup{U}}) -  \widehat{R}_{\text{U}}(\widehat{\bg}_{\textup{U}}) \right)
    + \left( \widehat{R}_{\text{U}}(\bg^*) - R(\bg^*) \right)\\
    & \leq 0 + 2\sup_{\bg \in \cG}\left| \widehat{R}_{\text{U}}(\bg) - R(\bg) \right| \\
    & \leq 4 \sqrt{2} LKC_{w} \sum_{m=1}^{M} \sum_{k=1}^{K} \mathfrak{R}_{n_m}(\cH_{k})
    + 2C_{w}KC_{\ell} \sqrt{\frac{M\ln{\frac{2}{\delta}}}{2N_{\min}}}.
\end{align*}
where $\widehat{R}_{\text{U}}(\widehat{\bg}_{\textup{U}}) \leq \widehat{R}_{\text{U}}(\bg^*)$ by the definition of $\widehat{\bg}_{\textup{U}}$.

\section{Proof of Lemma~\ref{lemma: float condition}}
By the relationship of distributions $p_k$ and $q_m$, we have
\begin{align*}
    b_{m,k} 
    = \sum_{i=1}^{K} \theta_{m,i} R^{01}_{p_i,k}(\bg^\star) 
    = R^{01}_{\sum_{i=1}^{K}\theta_{m,i}p_i,k}(\bg^\star) 
    = R^{01}_{q_m,k}(\bg^\star).
\end{align*}
When $\bg^\star$ is the Bayes optimal classifier and all classes are separable, we have
\begin{align*}
    R^{01}_{p_i,k}(\bg^\star)
    = \begin{cases}
    0 & \text{if } i = k, \\ 
    1 & \text{otherwise}. \\
    \end{cases}
\end{align*}
Then
\begin{align*}
    b_{m,k} 
    = \sum_{i=1}^{K} \theta_{m,i} R^{01}_{p_i,k}(\bg^\star) = \sum_{i \neq k} \theta_{m,i} = 1 - \theta_{m,k}
\end{align*}
holds.











\end{document}